\documentclass[11pt]{article}

\usepackage[final]{acl}

\usepackage{times}
\usepackage{latexsym}
\usepackage{algorithm}
\usepackage{algorithmic}
\usepackage{amssymb} 
\usepackage{amsmath}
\usepackage{booktabs}
\usepackage{multirow}
\usepackage{amsthm}
\newtheorem{theorem}{Theorem}
\usepackage[T1]{fontenc}

\usepackage[utf8]{inputenc}

\usepackage{microtype}

\usepackage{inconsolata}

\usepackage{graphicx}

%
%

\title{ELMM: Efficient Lightweight Multimodal Large Language Models for Multimodal Knowledge Graph Completion}


\author{
 \textbf{Wei Huang\textsuperscript{1}},
 \textbf{Peining Li\textsuperscript{1}},
 \textbf{Meiyu Liang\textsuperscript{1}},
 \textbf{Xu Hou\textsuperscript{1}},
 \textbf{Junping DU\textsuperscript{1}},
 \textbf{Yingxia Shao\textsuperscript{1}},
 \textbf{Guanhua Ye\textsuperscript{1}},
 \\
 \textbf{Wu Liu\textsuperscript{1}},
 \textbf{KangKang Lu\textsuperscript{1}},
 \textbf{Yang Yu\textsuperscript{1}}
\\
 \textsuperscript{1}School of Computer Science, Beijing University of Posts and Telecommunications, \\
}

\begin{document}
\maketitle
\begin{abstract}
Multimodal Knowledge Graphs (MKGs) extend traditional knowledge graphs by incorporating visual and textual modalities, enabling richer and more expressive entity representations. However, existing MKGs often suffer from incompleteness, which hinders their effectiveness in downstream tasks. Therefore, multimodal knowledge graph completion (MKGC) task is receiving increasing attention. While large language models (LLMs) have shown promise for knowledge graph completion (KGC), their application to the multimodal setting remains underexplored. Moreover, applying Multimodal Large Language Models (MLLMs) to the task of MKGC introduces significant challenges: (1) the large number of image tokens per entity leads to semantic noise and modality conflicts, and (2) the high computational cost of processing large token inputs. To address these issues, we propose \textbf{E}fficient \textbf{L}ightweight \textbf{M}ultimodal Large Language \textbf{M}odels (\textbf{ELMM}) for MKGC. ELMM proposes a Multi-view Visual Token Compressor (\textbf{MVTC}) based on multi-head attention mechanism, which adaptively compresses image tokens from both textual and visual views, thereby effectively reducing redundancy while retaining necessary information and avoiding modality conflicts. Additionally, we propose an attention pruning strategy to remove redundant attention layers in MLLMs, substantially reducing inference cost. To mitigate the performance degradation introduced by pruning, we further incorporate a linear projection for error compensation. Extensive experiments on four benchmark datasets demonstrate that ELMM achieves state-of-the-art performance.
\end{abstract}

\section{Introduction}
Knowledge Graphs (KGs) represent real-world knowledge in a structured form using factual triples (head, relation, tail). In recent years, they have attracted considerable attention from both academia and industry, and have been widely applied to various downstream tasks such as information retrieval \cite{2020Biomedical,wang2024knowledge}, and recommendation systems \cite{guo2020survey,jiang2024diffkg}. Multimodal Knowledge Graphs (MKGs) enhance traditional KGs by integrating information from multiple modalities, such as text and images, thereby enabling richer and more accurate knowledge representations. This multimodal fusion has shown to improve the performance of a wide range of intelligent systems \cite{guo2024lgmrec}. However, existing MKGs often suffer from incomplete information, which limits their effectiveness in practical applications. To address this challenge, the task of Multimodal Knowledge Graph Completion (MKGC) has emerged and has seen rapid development \cite{gao2025mixed,MyGO,MKGformer,LAFA}. The core objective of MKGC is to leverage multimodal data to enhance the expressiveness of entity representations and uncover latent knowledge, thereby completing missing information in MKGs.

Large models, through extensive pre- and post-training on massive datasets, exhibit emergent capabilities and encode rich real-world knowledge \cite{guo2025deepseek,xiong2025uniattn}, making them promising for KGC. For example, KICGPT \cite{Kicgpt} addresses the long-tail issue by re-ranking candidate triples from traditional methods, while MKGL \cite{mkgl} investigates whether LLMs can comprehend the structured representation of knowledge in the form of triples (head, relation, tail). However, current research largely overlooks multimodal inputs, leaving the potential of multimodal large language models (MLLMs) for MKGC underexplored.

In MKGs, entities are typically associated with various modalities of information, such as images and textual descriptions. For example, in datasets like FB15k-237-IMG \cite{FB15K-237} and WN18-IMG \cite{WN18}, each entity is linked to 10 images along with corresponding textual descriptions, providing rich multimodal contextual information to complement the structured triples. State-of-the-art MLLMs (e.g., LLaVA-1.5\cite{llava1.5}, Qwen2.5-VL \cite{qwen2.5vl}, and MobileVLM \cite{chu2023mobilevlm}) typically employ visual encoders, such as ViT \cite{vit}, to encode different regions of an image into a sequence of image tokens. These image tokens are then concatenated with text tokens and jointly processed by MLLMs in a unified framework.
\begin{figure}[t]
\centering
\includegraphics[width=0.9\columnwidth]{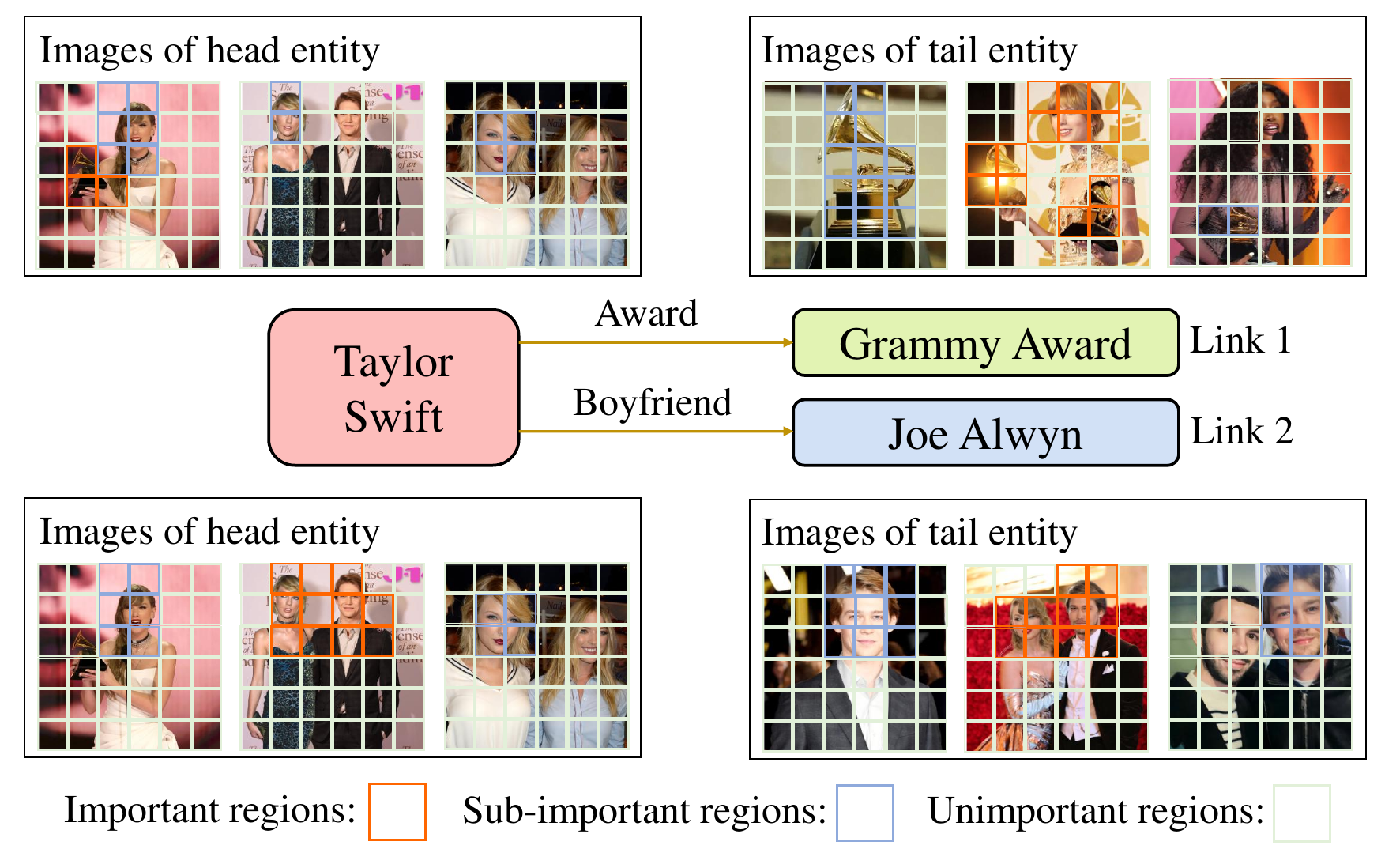}
\caption{A description of each associated image region's contribution to the entity Taylor Swift in a different link.}
\label{fig1}
\end{figure}
However, directly applying existing multimodal fusion paradigms to MKGC poses several challenges. First, \textbf{the number of image tokens per entity is large and often redundant.} Standard visual encoders generate hundreds of tokens per image; with 10 images per entity, this results in thousands of tokens. Moreover, the semantic relevance of visual regions is relation-dependent: as illustrated in Figure~\ref{fig1}, different relations emphasize different regions of Taylor Swift. Naively concatenating all visual and textual tokens therefore introduces noise and cross-modal interference, impeding effective semantic alignment. Second, \textbf{the computational cost of processing such large token inputs is prohibitive}. In MLLMs, the computational complexity of attention mechanisms typically grows quadratically with the number of tokens, while memory consumption increases at least linearly. This limits the model’s applicability in large-scale MKGC scenarios.

To address the above challenges, we propose ELMM, the first novel method to leverage \textbf{E}fficient \textbf{L}ightweight \textbf{M}ultimodal Large Language \textbf{M}odels (\textbf{ELMM}) for MKGC. First, we propose a \textbf{M}ulti-view \textbf{V}isual \textbf{T}oken \textbf{C}ompressor (\textbf{MVTC}) based on multi-head attention to select the most informative visual tokens from both textual and visual views. MVTC adaptively compresses visual tokens according to their semantic relevance to the textual description (entities and relations) and their intrinsic visual salience, aggregating tokens from multiple images into a fixed set of high-information representations. This design effectively reduces visual noise and modality interference, improving both inference efficiency and representational capacity. Second, we empirically identify structural redundancy in MLLM attention layers for MKGC, where certain layers produce highly similar input–output representations and contribute marginally to performance. To address this, we propose an attention pruning strategy that removes such redundant layers, significantly reducing inference overhead. We further observe that the pruning-induced error can be effectively compensated by a linear projection, and accordingly design a principled initialization pipeline for this projection.

Our contributions can be summarised as follows:
\begin{itemize}
    \item We propose \textbf{E}fficient \textbf{L}ightweight \textbf{M}ultimodal Large Language \textbf{M}odels (\textbf{ELMM}) for multimodal knowledge graph completion, a novel MKGC method that designs a \textbf{M}ulti-view \textbf{V}isual \textbf{T}oken \textbf{C}ompressor (\textbf{MVTC}) based on multi-head attention mechanism to adaptively compress image tokens from both textual and visual views. This mechanism enables effective compression of multimodal inputs while mitigating modality conflicts and reducing noise. To the best of our knowledge, this is the first work to address the MKGC task using MLLMs.
    \item We propose an attention pruning strategy that identifies and removes redundant attention layers in MLLMs for the MKGC task, thereby reducing computational overhead. To mitigate pruning-induced errors, we apply linear projection as compensation.
    \item Extensive experiments on four benchmark datasets demonstrate the superiority of ELMM in the MKGC task.
\end{itemize}
\begin{figure*}[t]
\centering
\includegraphics[width=0.92\textwidth]{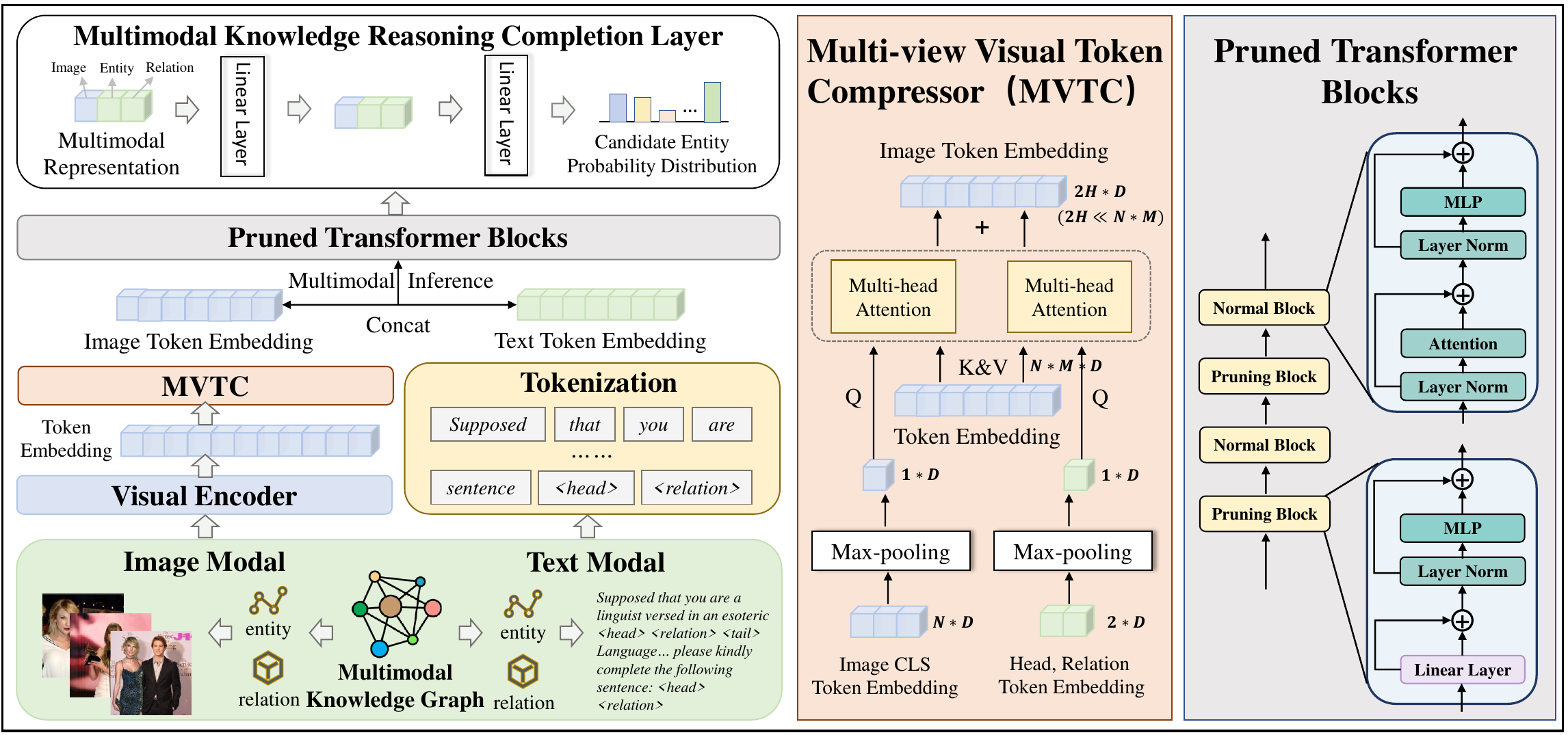} 
\caption{Overview of the ELMM architecture. From left to right: (1) ELMM processes input text and images containing incomplete triplet information; (2) MVTC adaptively compresses image tokens from both textual and visual views; (3) MLLM attention pruning and compensation mechanism.}
\label{fig2}
\end{figure*}

\section{Methodology}
\subsection{Preliminaries}
\subsubsection{Problem Formulation}
A multimodal knowledge graph (MKG) is defined as a directed graph $M K G=(\mathcal{E}, \mathcal{R}, \mathcal{G}, \mathcal{A}_{\mathcal{M}})$, where $\mathcal{E}$ and $\mathcal{R}$ denote the sets of entities and relations, respectively. The factual triples are represented as $\mathcal{G}=\{(e_{h}, r_{h, t}, e_{t}) \mid e_{h}, e_{t} \in \mathcal{E}, r_{h, t} \in \mathcal{R} \}$. Each entity is associated with a set of multimodal attributes $\mathcal{A}_{\mathcal{M}}$, which includes textual $\mathcal{M}_t$ and visual $\mathcal{M}_v$ modalities. The task of multimodal knowledge graph completion (MKGC) aims to infer missing entities in $\mathcal{G}$. Given an incomplete triple $(h, r, ?)$ or $(?, r, t)$, the goal is to predict the missing entity $t$ or $h$.
\subsubsection{Multimodal Large Language Models}
Generally, a multimodal large language model comprises the following components: a visual encoder and a textual tokenizer, which respectively segment images into multiple regions and encode it into a sequence of image token embeddings $I_{temp}\in \mathbb{R}^{NM\times E_i}$($N$ represents the number of images, $M$ represents the region of each image, and $E_i$ represents the dimension), while simultaneously converting the input text into a series of text token embeddings $T_t\in \mathbb{R}^{K\times D}$($K$ is the number of text tokens and $D$ is the dimension). The image token embeddings are then projected into the same dimensional space as the text embeddings via a linear transformation, yielding $I_t \in \mathbb{R}^{NM \times D}$. These are concatenated with the text embeddings to form a unified multimodal token sequence $T \in \mathbb{R}^{(NM + K) \times D}$; A series of transformer blocks $\mathcal{M}$, which processes the multimodal token sequence and generates a corresponding sequence of hidden states for prediction;
\begin{equation}
\mathbf{h}_{0: NM +  K-1}^{m}=\mathcal{M}\left(t_{0: NM + K -1}\right)
\end{equation}
and a head layer, which maps each hidden state to a probability distribution $\mathbf{p}_{n}=head(\mathbf{h}_{NM +  K}^{m})$.
\subsection{ELMM Framework}

The overall architecture of ELMM is depicted in Figure~\ref{fig2}. Applying conventional multimodal large language models (MLLMs) directly to MKGC generates excessive image tokens (e.g., over 1,000 tokens from 10 entity-related images after encoding), introducing noise and exacerbating modality conflicts. To mitigate these issues, ELMM proposes a \textbf{M}ulti-view \textbf{V}isual \textbf{T}oken \textbf{C}ompressor (\textbf{MVTC}) based on multi-head attention mechanism, which leverages multi-head attention to adaptively compress image tokens from both textual and visual views. MVTC effectively retains text-relevant information and key image details, thereby facilitating robust cross-modal alignment. Moreover, during MKGC, certain attention layers within the transformer blocks $\mathcal{M}$ exhibit high input-output similarity, indicating computational redundancy. ELMM addresses this by incorporating an \textbf{attention pruning strategy} that eliminates such redundant layers. A linear compensation module is then employed to mitigate approximation errors introduced by pruning, resulting in the pruned transformer blocks denoted as $\mathcal{M}'$. Finally, to better support MKGC, ELMM replaces the traditional head layer with a multimodal knowledge reasoning completion layer, improving cross-modal fusion and generating a probability distribution over candidate entities.
\subsection{Multi-view Visual Token Compressor}

The overall workflow of \textbf{M}ulti-view \textbf{V}isual \textbf{T}oken \textbf{C}ompressor (\textbf{MVTC}) is illustrated in the middle section of Figure \ref{fig2}. Considering that the importance of an image token can vary across different relational contexts (see Figure~\ref{fig1}), MVTC first starts from the textual view and compresses image tokens using entity and relation tokens from text. This process retains the token representations that are most strongly associated with text information, thereby achieving modality alignment. Specifically, we concatenate the entity and relation token embeddings, denoted as $T_{\mathrm{entity}}$ and $T_{\mathrm{relation}}$, respectively, and apply max-pooling to obtain a fused textual representation $X_t \in  \mathbb{R}^{1\times D}$. This representation $X_t$, together with the image token embeddings  $I_t$, is used as input to a Multi-Head Attention (MHA) mechanism. The attention computation is defined as:
\begin{equation}
Q_i=X_tW_i^Q, K_i=I_tW_i^K,Q_i\in \mathbb{R}^{1\times d},K_i\in \mathbb{R}^{MN\times d}
\end{equation}
\begin{equation}
V=I_tW^V,\quad V\in \mathbb{R}^{NM\times D}
\end{equation}
\begin{equation}
\begin{split}
Head_i &= Attention(Q_i, K_i, V) \\
       &= softmax\bigg(\frac{Q_i K_i^T}{\sqrt{d}}\bigg)V,\quad Head_i \in \mathbb{R}^{1 \times D}
\end{split}
\end{equation}
\begin{equation}
\begin{split}
I_{text}=Concat(Head_1,...,Head_H),I_{text} \in \mathbb{R}^{H\times D}
\end{split}
\end{equation}
Here, $W_i^Q, W_i^K, W^V$ are trainable projection matrices; $d$ denotes the dimensionality of the queries and keys, where $d = D / H$, and $H$ is the number of attention heads. The obtained $I_{text}$ represents the result of compressing the image tokens from the textual view. Meanwhile, to ensure the effective preservation of core information in images, MVTC also performs compression on image tokens from the visual view. Specifically, to preserve the most representative global information from the images, we first extract the CLS tokens from multiple images and concatenate them to form:
\begin{equation}
T_{\mathrm{CLS}}=\operatorname{concat}\left(\mathrm{CLS}_{1}, \mathrm{CLS}_{2}, \ldots, \mathrm{CLS}_{N}\right).
\end{equation}
We then apply a max-pooling operation over $T_{\mathrm{CLS}}$ to obtain a fused representation, denoted as $X_i$. Similar to the calculation process of $I_{text}$, we use $X_i$ and $I_T$ as the input of MHA to finally obtain $I_{image}$. The obtained $I_{image}$ contains the most important information of the image itself, representing the visual token information compressed from the visual view. We concatenate $I_{image}$, $I_{text}$, and $T_t$ to obtain $T$, and then reason through the pruned transformer blocks $\mathcal{M}'$ to obtain hidden states $ \mathbf{h}_{0: 2H + K-1}^{m}$. As a result, the aggregated hidden states retain not only essential visual information but also visual cues semantically aligned with the textual modality.

\subsection{Attention Pruning Strategy}
\begin{figure}[t]
\centering
\includegraphics[width=0.98\columnwidth]{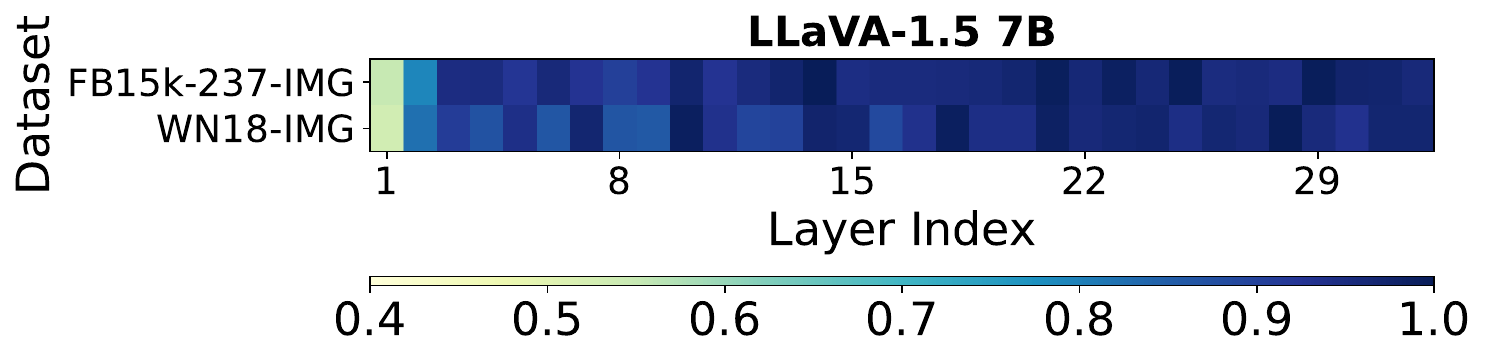}
\caption{Cosine similarity between the input and output of the attention layer in the training data of FB15k-237-IMG and WN18-IMG. The calculation method is to take 1,000 data points from the training set, perform inference, and calculate the average cosine similarity.}
\label{SimA}
\end{figure}
Due to the high computational cost of attention mechanisms, they often introduce significant inference latency. For instance, with an input sequence length of 512 tokens, the attention module alone can introduce up to 0.3437 seconds of latency per inference, accounting for approximately 77.9\% of the total inference time (0.4410 seconds). To mitigate this overhead, we perform a redundancy analysis of the attention modules in MLLMs, such as LLaVA-1.5 \cite{llava1.5}, during MKGC task. Specifically, we measure the cosine similarity between the input and output of each attention module to assess its redundancy. As illustrated in Figure \ref{SimA}, the upper layers of the model consistently exhibit high input-output similarity across various experimental settings. This observation suggests a significant degree of computational redundancy in certain attention modules of MLLMs during MKGC task.

Based on this finding, we propose an Attention Pruning Strategy that selects the top-$K$ layers with the highest input-output cosine similarity and prunes their attention modules. However, the direct removal of attention mechanisms inevitably introduces a forward propagation error, denoted as $\varepsilon = x^{ori} - x^{pruning}$, where $x^{ori}$ is the original output and $x^{pruning}$ is the output after pruning. To compensate for the error we utilize a linear projection $x^{pruning}W_c$ to fit the error $\varepsilon$. Then, we propose a training-free initialization method for $W_c$. Furthermore, we provide the following theorem to guide the initialization of $W_c$:
\begin{theorem}
\label{theorem:init}
    The initialization of $W_c$ that minimizes the expected compensation error satisfies:
    \begin{equation}
        W_{c}=V\Sigma^{+}U^T\mathbb{E}(\epsilon),
    \end{equation}
    where $U\Sigma V^T$ is the SVD decomposition of $\mathbb{E}(x^{pruning})$, $\Sigma^{+}$ denotes the pseudoinverse of $\Sigma$.
\end{theorem}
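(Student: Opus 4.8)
The plan is to set up the compensation objective as a least-squares problem in expectation and then show that the stated formula is exactly the minimum-norm solution given by the Moore--Penrose pseudoinverse. First I would make precise what "expected compensation error" means: the linear projection is meant to fit the pruning residual $\varepsilon = x^{\mathrm{ori}} - x^{\mathrm{pruning}}$, so the natural objective is to choose $W_c$ minimizing $\big\| \mathbb{E}(x^{\mathrm{pruning}})\, W_c - \mathbb{E}(\varepsilon) \big\|$ (Frobenius norm, or the Euclidean norm after noting $\mathbb{E}(\varepsilon)$ is a vector / the relevant quantities are taken in expectation so that the data matrix collapses to $\mathbb{E}(x^{\mathrm{pruning}})$). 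Write $A := \mathbb{E}(x^{\mathrm{pruning}})$ and $b := \mathbb{E}(\varepsilon)$, so the problem is $\min_{W_c} \| A W_c - b \|$.

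Next I would invoke the standard characterization of least-squares solutions via the pseudoinverse: for any matrix $A$ with (reduced) SVD $A = U\Sigma V^T$, the minimizer of $\|AW_c - b\|$ of smallest norm is $W_c = A^{+} b = V\Sigma^{+}U^T b$. I would either cite this as a classical fact or sketch the one-line argument: decompose $b = AA^{+}b + (I - AA^{+})b$ into the part in the column space of $A$ and its orthogonal complement; the orthogonal part is unreachable and contributes a fixed residual, while $A^{+}b$ attains the reachable part, and among all vectors attaining it, $A^{+}b$ has minimum norm because it lies in the row space of $A$ (i.e. $\mathrm{row}(A) \perp \ker(A)$). Substituting $A = \mathbb{E}(x^{\mathrm{pruning}})$ and $b = \mathbb{E}(\varepsilon)$ then yields
\begin{equation}
W_c = V\Sigma^{+}U^T \mathbb{E}(\varepsilon),
\end{equation}
which is the claimed expression.

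The main obstacle is not the linear algebra but getting the modeling step airtight: I need to justify why the expectation can be pushed inside so that the empirical objective $\sum_j \| x_j^{\mathrm{pruning}} W_c - \varepsilon_j \|^2$ reduces (or is relaxed) to a problem whose data matrix is $\mathbb{E}(x^{\mathrm{pruning}})$ rather than the full stacked matrix of samples. The cleanest route is to define the target explicitly as minimizing $\|\mathbb{E}(x^{\mathrm{pruning}} W_c - \varepsilon)\|^2$ (error of the expected/averaged representation), or equivalently to work with the mean-centered first-moment statistics used to initialize $W_c$ in a training-free manner; under that definition the reduction is immediate and the pseudoinverse formula drops out. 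I would state this definitional choice up front, then the rest is the textbook pseudoinverse argument. A minor point to handle is dimensional consistency of $\Sigma^{+}$ and the orientation of the SVD factors, which I would fix by using the reduced SVD and noting $\Sigma^{+}$ inverts the nonzero singular values and transposes the shape.
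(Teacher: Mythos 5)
Your proposal is correct and follows essentially the same route as the paper: define the objective as $\min_{W_c}\|\mathbb{E}[x^{pruning}]W_c-\mathbb{E}[\varepsilon]\|^2$, recognize it as a least-squares problem, and read off the minimum-norm solution $W_c=V\Sigma^{+}U^T\mathbb{E}(\varepsilon)$ from the SVD pseudoinverse. If anything, you are more careful than the paper about the modeling step (distinguishing the collapsed objective on first moments from the full empirical sum of squares) and about why the pseudoinverse gives the minimizer.
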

\begin{proof}
Minimizing the expected compensation error corresponds to solving the following optimization problem:
\begin{equation}
    \min_{W_c} \| \mathbb{E}[x^{pruning} W_c - \varepsilon] \|^2.
\end{equation}
Since $W_c$ is independent of the data distribution, the expectation operator can be applied to $x^{pruning}$ and $\varepsilon$ separately. Thus, the problem reduces to:
\begin{equation}
    \min_{W_c} \left\| \mathbb{E}[x^{pruning}] W_c - \mathbb{E}[\varepsilon] \right\|^2.
\end{equation}
This is a standard least-squares problem with the solution:
\begin{equation}
    W_c = \mathbb{E}[x^{pruning}]^+ \, \mathbb{E}[\varepsilon],
\end{equation}
where $\mathbb{E}[x^{pruning}]^+$ is the Moore–Penrose pseudoinverse of $\mathbb{E}[x^{pruning}]$. Given the SVD $\mathbb{E}[x^{pruning}] = U \Sigma V^\top$, the pseudoinverse is $\mathbb{E}[x^{pruning}]^+ = V \Sigma^+ U^\top$, leading to the final result:
\begin{equation}
    W_c = V \Sigma^+ U^\top \, \mathbb{E}[\varepsilon].
\end{equation}
\end{proof}
In practical applications, we randomly sampled 1,000 instances from the training dataset and employed the average values of $\bar{x}^{pruning}$ and $\bar{\epsilon}$ as estimators for $\mathbb{E}(x^{pruning})$ and $\mathbb{E}(\epsilon)$, respectively. Considering that errors in the early layers of the compensation model may propagate and affect the representations in subsequent layers, we adopted a bottom-up strategy to initialize $W_c$. Experimental results on the FB15k-237-IMG dataset demonstrate that, after applying linear compensation, the error of LLaVA-1.5 7B is significantly reduced from 276.5 (without compensation) to 4.7, corresponding to only 1.7\% of the original error. This significant improvement underscores the effectiveness of our linear compensation method.

\subsection{Multimodal Knowledge Reasoning Completion Layer}
To fit the multimodal knowledge graph completion task, we employ a multimodal knowledge reasoning completion layer to compute the probability distribution over candidate entities. The process is defined as follows:
\begin{equation}
    E_m=[E_{image},E_{entity},E_{relation}], S \in \mathbb{R}^{1\times 3D}
\end{equation}
\begin{equation}
    E_m'=LinearLayer(E_m),E_m' \in \mathbb{R}^{1\times 3D}
\end{equation}
\begin{equation}
    p=LinearLayer(E_m'),p \in \mathbb{R}^{1\times r}
\end{equation}
Here, $E_{image}$ is obtained by max-pooling over the hidden states of image modality hidden states. $E_{entity}$ and $E_{relation}$ correspond to the hidden states of the entity and relation tokens. These are concatenated to form the multimodal representation $E_{m}$. The fused representation is first processed by a linear layer and then projected to obtain the probability distribution $p \in \mathbb{R}^{1\times r}$ of the candidate entities, where $\mathit{r} $ represents the number of entities.

\subsection{Optimization}
To optimize ELMM, we employ a contrastive learning objective, defined as follows:
\begin{equation}
\begin{split}
\mathcal{L} &= \sum_{(e_i, r_k, e_j) \in \mathcal{T}_{train}} \bigg[ -\log (p_{e_{j}}) \\
            &\quad + \frac{1}{|\mathcal{N}_{neg}(e_{j})|} \sum_{e_{neg} \in \mathcal{N}_{neg}(e_{j})} \log (1-p_{e_{neg}}) \bigg].
\end{split}
\end{equation}
\begin{table}[t]
\centering
\resizebox{0.47\textwidth}{!}{
\begin{tabular}{cccccc}
    \midrule
    \textbf{Dataset} & \textbf{\#Rel.} & \textbf{\#Ent.} & \textbf{\#Train} & \textbf{\#Dev} & \textbf{\#Test} \\
    \midrule
    FB15k-237-IMG      & 237        & 14,541           & 272,115 
    & 17,535          &20,466
    \\    
    WN18-IMG      & 18        & 40,943           & 141,442
    & 5,000         &5,000
    \\
    DB15K      & 279        & 12842           & 79222
    & 9902         & 9904
    \\
    MKG-W      & 169        & 15000           & 34196
    & 4276   & 4274
    \\
    \midrule
\end{tabular}
}
\caption{Statistics of Dataset.}
\label{tab:my-data}
\end{table}
Here, $\mathcal{N}_{neg}\left(e_{j}\right)=\left\{e_{neg }\cup e_{i}  \mid e_{neg} \neq e_{j}, e_{neg} \in \mathcal{E}\right\}$ denotes the set of negative samples for the target entity $e_j$. Since the prediction of tail entities in the form of $(e_i, r_k, ?)$ often results in a bias where the head entity $e_i$ is assigned disproportionately high confidence, we incorporate a self-denoising strategy by treating the head entity $e_i$ as a hard negative sample. The loss function $\mathcal{L}$ is in a form of a binary cross-entropy.

\section{Experiments}
\begin{table*}[ht]
    \centering
    \resizebox{1 \textwidth}{!}{
    \begin{tabular}{cccccccccc}
\hline
\multirow{2}{*}{Model Name}  & \multicolumn{4
    }{c}{FB15k-237-IMG} &  & \multicolumn{4}{c}{WN18-IMG}    \\ \cline{2-5} \cline{7-10} 
                                                   & MR   & Hits@1  & Hits@3 & Hits@10 &  & MR  & Hits@1 & Hits@3 & Hits@10 \\   \hline 
\multicolumn{10}{c}{\textbf{Multi-modal KGC Models}} \\ \hline
TransAE(IJCNN 2019)                                   & 431  & 19.9   & 31.7  & 46.3   &  & 352 & 32.3  & 83.5  & 93.4   \\
RSME(ACMMM 2021)                                     & 417  & 24.2   & 34.4  & 46.7   &  & 223 & 94.3  & 95.1  & 95.7   \\
KG-BERT(2019)                  & 153  & -       & -      & 42.0   &  & 58  & 11.7  & 68.9  & 92.6   \\
VisualBERT(2019)            & 592  & 21.7   & 32.4  & 43.9   &  & 122 & 17.9  & 43.7  & 65.4   \\
VILBERT(ICLR 2020)             & 483  & 23.3   & 33.5  & 45.7   &  & 131 & 22.3  & 55.2  & 76.1   \\
MKGformer(SIGIR 2022)          & 221  & 25.6   & 36.7  & 50.4   &  & 28  & 94.4  & 96.1  & 97.2   \\
LAFA(AAAI 2024)       & \underline{136}  & 26.9   & 39.8  & 55.1   &  & \underline{25}  & 94.7  & 96.5  & 97.7   \\ 
NativE(SIGIR 2024)   & 149  & 25.4   & 38.6  & 54.2   &  & 35  & 94.2  & 95.8  & 97.1   \\ 
SGMPT$^{*}$(ACMMM 2024)        & 238  & 25.2   & 37.0  & 51.0   &  & 29  & 94.3  & \underline{96.6}  & 97.8   \\
MyGO$^{*}$(AAAI 2025)         & -    & 19.0    & 28.9  & 44.7   &  & –   & 70.6  & 93.7  & 94.1   \\
MPIKGC$^{*}$(COLING 2024)  & -    & 24.4   & 35.8  & 50.3   &  & –   & -      & -      & -       \\
AdaMF-MAT$^{*}$(COLING 2024)      & –    & 23.1   & 35.0  & 49.1  &  & –   & 73.6  & 94.3  & 95.8   \\\hline \multicolumn{10}{c}{\textbf{Uni-modal KGC Models}}    \\ \hline
KICGPT (EMNLP 2023)      & 154    & 32.7   & 44.8  & 55.4  &  &  \underline{25}  & 93.5  & 95.6  & 96.8   \\ 
MKGL (NeurIPS 2024)      & 172   & 32.5   & 45.4  & 59.1  &  & 31   & 93.6  & 95.4  & 96.3   \\ 
K-ON (AAAI 2025)  & 144   & 33.2   & 45.7  & 59.8  &  & 26   & 94.3  & 95.7  & 97.2   \\ 
GLTW (ACL 2025)  & -   & \underline{35.1}   & \underline{48.1}  & \underline{61.4}  &  & \underline{17}   & \underline{95.2}  & 96.3  & \underline{97.9}   \\
PEKGC (EMNLP 2025)  & -   & 33.6   & 46.6  & 54.4  &  & 30   & 94.8  & 96.0  & 96.9   \\
\hline
ELMM(Ours)                  & \textbf{105}    & \textbf{37.4}  & \textbf{50.2} & \textbf{63.7} &  & \textbf{11}  & \textbf{96.1} & \textbf{97.8} & \textbf{98.9}  \\ \hline
\footnotesize{${*}$ indicates that the result comes from SGMPT \cite{SGMPT}.}
\end{tabular}
}
\caption{Models performance on the FB15k-237-IMG and WN18-IMG datasets. \textbf{Bold} and \underline{underline} text indicate the best and second-best performance, respectively.}
\label{tab:my-table}
\end{table*}
\subsection{Datasets}
We evaluate the performance of ELMM on four publicly available multimodal knowledge graph datasets: FB15k-237-IMG \cite{FB15K-237}, WN18-IMG \cite{WN18}, DB15K\cite{DB15K}  and MKG-W\cite{MMRNS}. Each dataset comprises three modalities: (1) structured knowledge graph triples, (2) textual descriptions of entities, and (3) a set of associated images per entity. Detailed statistics for both datasets are presented in Table \ref{tab:my-data}. Due to space constraints, we report evaluation results primarily on the FB15k-237-IMG and WN18-IMG datasets in the main text; \textbf{results on DB15K and MKG-W are provided in Appendix \ref{sec:appendix_ER}}.

\subsection{Setting}
We adopt LLaVA-1.5 7B \cite{llava1.5} as the base multimodal large language model (MLLM), incorporating Low-Rank Adaptation (LoRA) \cite{hu2022lora} techniques into both the query and value layers to enable parameter-efficient fine-tuning. The parameter $K$ in the attention pruning strategy is always set to 16, indicating that 16 attention modules in the MLLM are pruned and replaced with linear projections. Full hyperparameter configurations are provided in the Appendix \ref{sec:appendix_ED}.

To evaluate the effectiveness of ELMM on the Multimodal Knowledge Graph Completion (MKGC), we adopt standard evaluation metrics, including Hits@k (including Hits@1, Hits@3, and Hits@10) to assess ranking precision at different thresholds, and Mean Rank (MR) to measure the average ranking position of the correct target entities among the predictions.

For a comprehensive performance comparison on FB15k-237-IMG and WN18-IMG datasets, we benchmark our model against two representative categories of methods: (i) multimodal knowledge graph completion methods, and (ii) uni-modal knowledge graph completion methods. The former category includes TransAE \cite{TransAE}, RSME \cite{RSME}, KG-BERT \cite{kg-bert}, VisualBERT \cite{visualbert}, ViLBERT \cite{Vilbert}, MKGformer \cite{MKGformer}, LAFA \cite{LAFA}, NativE \cite{zhang2024native}, SGMPT \cite{SGMPT}, MyGO \cite{MyGO}, MPIKGC \cite{MPIKGC}, and AdaMF-MAT \cite{AdaMF-MAT}. The LLM-based uni-modal knowledge graph completion baselines include KICGPT \cite{Kicgpt}, K-ON \cite{kno}, GLTW \cite{GLTW}, PEKGC \cite{SAT} and MKGL \cite{mkgl}.


\subsection{Comparison with State-of-the-art}

We conduct a comprehensive comparison between the proposed ELMM method and 18 state-of-the-art models across FB15k-237-IMG and WN18-IMG datasets. As shown in Table \ref{tab:my-table}, ELMM consistently outperforms all competing methods across all evaluation metrics. Notably, on the FB15k-237-IMG dataset, ELMM demonstrates substantial improvements over most powerful multimodal approaches, achieving relative gains of 39.0\%, 26.1\%, and 15.6\% in Hits@1, Hits@3, and Hits@10, respectively. These improvements primarily stem from ELMM’s ability to directly leverage the expressive reasoning capabilities of MLLMs to jointly model multimodal information within a unified semantic space. As a result, ELMM more effectively resolves ambiguous cases and entities that require fine-grained visual reasoning. These results strongly demonstrate the potential of MLLMs for MKGC. Furthermore, compared to uni-modal methods based on large language models, ELMM exhibits clear advantages on both datasets, underscoring its effectiveness and robustness in extracting and integrating multimodal information.
\subsection{Ablation Study}
To evaluate the effectiveness of each component in ELMM, we conduct a series of ablation studies by removing or modifying key modules of the model. Specifically: w/o Image removes the visual views image token representations ($I_{image}$); w/o Text removes the textual view image token representations ($I_{text}$); w/o MVTC removes MVTC and does not compress image tokens; w/o pruning indicates the removal of the attention pruning strategy, and MLLM is not pruned; w/o Linear removes the linear compensation mechanism and replaces it with a standard residual connection; w/o Init replaces the initialization strategy of the compensation matrix $W_c$ proposed in Theorem 1 with zero initialization; and head layer replaces the multimodal knowledge reasoning completion layer with a conventional head-layer.

As shown in Table \ref{compare two commongeneration losses}, w/o Text performance has declined significantly, while w/o Image has not declined significantly. This indicates that image tokens associated with textual context are more critical in the MKGC task. One possible explanation is that the importance of the same token may vary across different relations (as illustrated in Figure \ref{fig1}), a unique aspect of MKGC that $I_{image}$ representations fail to fully capture. Moreover, removing the MVTC component leads to a substantial performance drop in ELMM, primarily due to increased noise and intensified modality conflicts caused by the large number of image tokens. This ablation result further confirms that directly applying MLLMs to the MKGC task yields suboptimal outcomes. \textbf{It highlights the necessity of designing targeted image token compression and selection mechanisms to mitigate cross-modal information interference specific to this task.}

Notably, ELMM achieves over 30\% faster inference without significant performance degradation  compared to the non-pruned model, indicating redundancy in attention modules for the MKGC task. Moreover, removing the linear compensation or initializing its projection weights $W_c$ to zero both lead to comparable performance drops, suggesting that the effectiveness of linear compensation depends critically on appropriate initialization. Also, removing the multimodal knowledge reasoning completion layer similarly hurts overall performance.

\begin{table}[t]
\centering
\resizebox{0.47\textwidth}{!}{
\begin{tabular}{lccccc}
\hline
Model               &   MR  & Hits@1 & Hits@3 & Hits@10  \\ \hline
ELMM        & \underline{105} & \textbf{37.4}  & \textbf{50.2}  & \underline{63.7}  \\ 
- w/o Image   & 115 & 37.6  & 49.8  & 63.1        \\ 
- w/o Text   & 146 & 33.5  & 46.3  & 60.7        \\
- w/o MVTC   & 187 & 28.5  & 41.5  & 56.2        \\
- w/o Pruning & \textbf{102} & 37.1  & \underline{50.0}  & \textbf{64.2}   \\
- w/o Linear      & 146 & 35.9  & 47.7  & 60.6      \\
- w/o Init     & 140 & 36.8  & 47.5  & 61.5      \\
- Head Layer     & 119 & \underline{37.2}  & 49.2  & 63.0       \\  
\hline
\end{tabular}
}
\caption{Ablation study results for the FB15k-237-IMG dataset. Bold and underline denote the best and second-best
performance of compressed models.}
\label{compare two commongeneration losses}
\end{table}

\subsection{Parameter Sensitivity Analysis}
\begin{figure}[t]
\centering
\includegraphics[width=0.99\columnwidth]{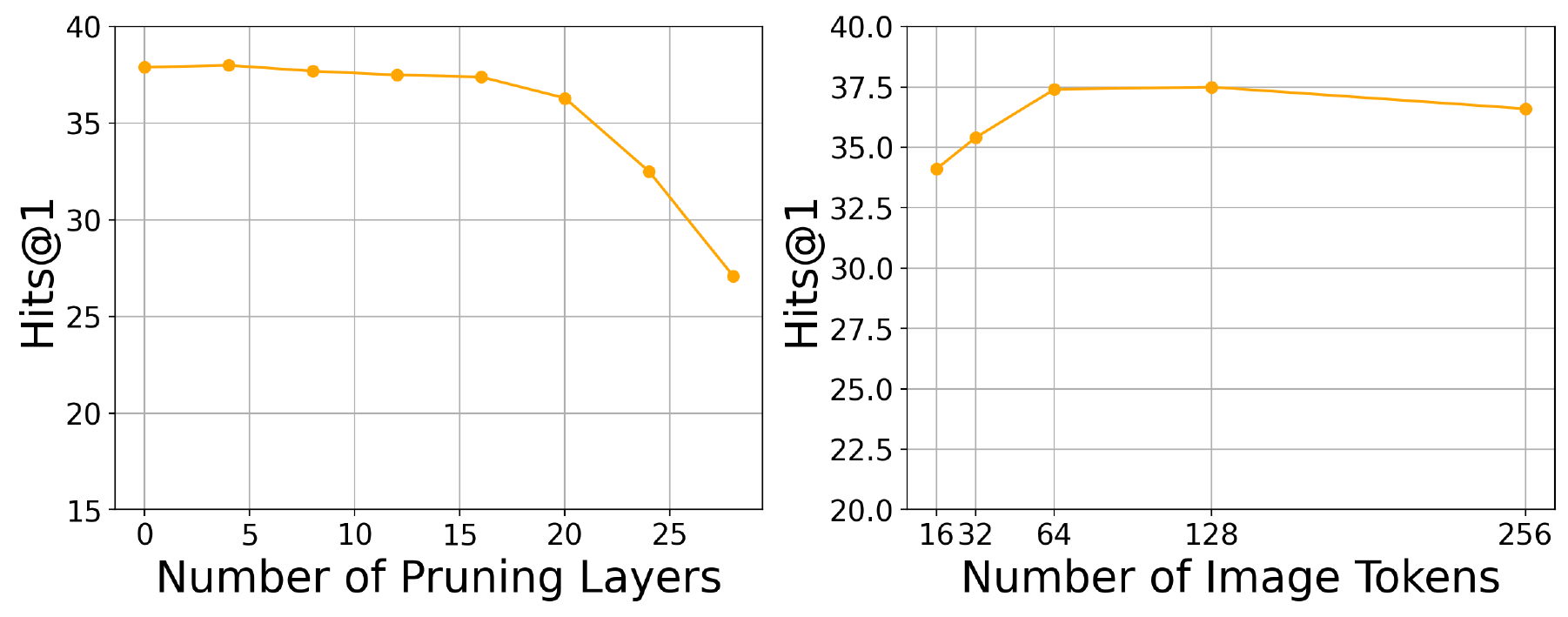}
\caption{Hyperparameter analysis results. Left: Relationship between Hits@1 and the number of attention pruning layers. Right: Relationship between Hits@1 and number of image tokens are retained after MVTC.}
\label{ana}
\end{figure}
\subsubsection{Number of Pruning Layers}
In our propose attention pruning strategy, we employ a similarity-based criterion to identify and prune the top-$K$ most similar layers. We evaluate the impact of pruning layers on performance, and the corresponding results are shown in Figure \ref{ana}(left). Experimental findings indicate that the model maintains stable performance when the number of pruned layers is relatively small, implying a degree of redundancy in the attention modules of multimodal large language models (MLLMs) for MKGC. However, a marked degradation in performance is observed when the pruning depth exceeds 17 layers. Based on this analysis, we fix the pruning depth at 16 layers to achieve a favorable trade-off between inference efficiency and task accuracy.
\subsubsection{Number of Image Tokens are Retained After MVTC}
We systematically evaluate the impact of the number of retained image tokens on model performance, as shown in Figure \ref{ana}(right). Overall, performance improves as more image tokens are preserved, indicating the positive contribution of visual information to multimodal knowledge graph completion. However, when the number of retained image tokens exceeds 128, performance degrades, suggesting that excessive visual tokens may introduce redundancy or noise that hampers effective modeling. Balancing predictive performance and computational efficiency, we retain 64 visual tokens in subsequent experiments.
\subsection{Impact of Backbone MLLMs on MKGC Performance}
To investigate whether ELMM can continuously benefit from the rapid evolution of MLLMs, we study the impact of different backbone models on completion performance. Specifically, we select LLaVA-1.5 7B \cite{llava1.5}, LLaVA-1.5 13B \cite{llava1.5}, and Qwen2.5-VL 7B \cite{qwen2.5vl} as comparative models. The results are presented in Table \ref{base_model}.

From our experimental results, we observe that while LLaVA-1.5 13B contains a greater number of parameters than LLaVA-1.5 7B, the corresponding performance gains are relatively limited. This outcome may be attributed to the modest difference in parameter count between the two models. In contrast, improvements in the backbone architecture appear to have a more substantial impact on performance. For example, Qwen2.5-VL 7B consistently outperforms LLaVA-1.5 7B and even LLaVA-1.5 13B on several evaluation metrics.
\begin{table}[t]
\centering
\resizebox{0.47\textwidth}{!}{
\begin{tabular}{cccccc}
\hline
Base Model               &   MR  & Hits@1 & Hits@3 & Hits@10  \\ \hline
LLaVA-1.5 7B        & 105 & 37.4  & 50.2  & 63.7  \\ 
LLaVA-1.5 13B   & 102 & 37.8  & 50.7  & 64.0        \\ 
Qwen2.5-VL 7B      & 83 & 38.6  & 52.2  & 64.9      \\
\hline
\end{tabular}
}
\caption{Results of ELMM using different base models on the FB15k-237-IMG dataset.}
\label{base_model}
\end{table}
\begin{table}[t]
\centering
\begin{tabular}{cccccc}
\hline
Model Name & Inference Time &   MR  & Hits@1   \\ \hline
ELMM &  0.081   & 105 & 37.4    \\ 
MKGL  &   0.126  & 172 & 32.5    \\ 
K-ON  &   0.084  & 144 & 33.2 \\ 
\hline
\end{tabular}
\caption{Comparison of ELMM with state-of-the-art uni-modal methods based on large language models in terms of inference time and model performance.}
\label{Infer}
\end{table}
\subsection{Inference Time Analysis}

We conduct experiments on the FB15k-237-IMG dataset to compare the end-to-end reasoning latency of ELMM with that of state-of-the-art unimodal large language model–based methods, including MKGL and K-ON. Specifically, we randomly sample 1,000 instances from the training set for inference, and report the average latency as the inference time. As shown in Table~\ref{Infer}, despite the additional cost of processing multimodal inputs, ELMM exhibits lower inference latency than unimodal baselines. This efficiency gain is primarily attributed to the proposed attention pruning strategy, which effectively reduces redundant computation during reasoning. These results demonstrate that ELMM not only outperforms unimodal methods in reasoning efficiency, but also achieves strong performance on standard evaluation metrics by effectively modeling and integrating multimodal information.
\section{Conclusion}

We propose Efficient Lightweight Multimodal Large Language Models (ELMM), a novel framework that extends MLLMs to the task of MKGC. ELMM addresses two key challenges: (1) semantic noise and modality conflicts caused by excessive image tokens, and (2) the substantial computational overhead associated with processing such inputs using standard MLLMs. To mitigate these issues, we propose a \textbf{M}ulti-view \textbf{V}isual \textbf{T}oken \textbf{C}ompressor (\textbf{MVTC}) based on multi-head attention mechanism, which adaptively compresses image tokens from both textual and visual views. We further apply attention pruning to remove redundant attention layers from the MLLMs backbone, improving inference efficiency. To offset any performance degradation from pruning, we add a linear projection with a carefully designed initialization scheme. Experiments on  four benchmark datasets, show that ELMM achieves state-of-the-art performance.

\section*{Limitations}

Although ELMM demonstrates promising effectiveness, it relies on powerful pretrained multimodal backbone models, such as LLaVA and Qwen2.5-VL. While our framework is largely model-agnostic, its performance and efficiency gains are still constrained by the representational capacity of the underlying multimodal large language models (MLLMs).

\bibliography{custom}

\appendix
\section{License for Scientific Artifacts}

The Training dataset FB15k-237-IMG \cite{FB15K-237} is licensed under Creative Commons Attribution 4.0 License\footnote{https://choosealicense.com/licenses/cc-by-4.0/}. The Training dataset WN18-IMG \cite{WN18} is licensed under WordNet Release 3.0 License\footnote{https://wordnetcode.princeton.edu/3.0/README}. The Training dataset DB15K \cite{DB15K} is licensed under CC BY-SA 3.0 License\footnote{https://creativecommons.org/licenses/by-sa/3.0/deed.en}. The Training dataset MKG-W \cite{MMRNS} is licensed under CC BY-SA 3.0 License\footnote{https://creativecommons.org/licenses/by-sa/3.0/deed.en}. The LLaVA-1.5 model \cite{llava1.5} and Qwen2.5-VL model \cite{qwen2.5vl} are licensed under Apache License 2.0\footnote{https://choosealicense.com/licenses/apache-2.0/}. All usages of scientific artifacts in this paper obey the corresponding licenses.
\section{Related Works}
\label{sec:appendix_RW}

\subsection{Multimodal Knowledge Graph Completion}
Multimodal knowledge graph completion (MKGC) seeks to enhance the reasoning ability of knowledge graphs by jointly leveraging structural triples and multimodal entity information such as images and text. Early approaches \cite{TransAE,RSME,IKRL} typically used simple feature concatenation, but recent research has introduced more advanced techniques for fine-grained and adaptive multimodal fusion. Representative models include MKGformer \cite{MKGformer}, which employs hybrid transformers with multi-level fusion; SGMPT \cite{SGMPT} introduces structural information based on MKGformer; LAFA \cite{LAFA}, which adopts link-aware fusion and neighbor aggregation; MyGO \cite{MyGO}, which tokenizes multimodal entity information for fine-grained representation;  and NativE \cite{zhang2024native}, which addresses modality imbalance in real-world scenarios.

With the rapid advancement of multimodal large language models (MLLMs), these models have demonstrated substantial potential in task related to Multimodal Knowledge Graphs (MKGs). However, most studies have yet to fully explore and exploit the capabilities of MLLMs. To address this gap, we propose ELMM, a novel method designed to harness the strengths of MLLMs.

\begin{table}[t]
    \centering
    \resizebox{0.48 \textwidth}{!}{
    \begin{tabular}{ccccc}
\hline
Datasets & MLLM   & LoRA r  & LoRA dropout & LoRA Target  \\   \hline 
FB15k-237-IMG  & LLaVA-1.5 7B  & 32   & 0.05  & query, value    \\
WN18-IMG     & LLaVA-1.5 7B  & 32   & 0.05  & query, value   \\  
DB15K  & LLaVA-1.5 7B  & 32   & 0.06  & query, value   \\  
MKG-W     & LLaVA-1.5 7B  & 32   & 0.06  & query, value   \\  
\hline 

\end{tabular}
}
\caption{LoRA settings in the main experiments.}
\label{tab1}
\end{table}
\begin{table}[t]
\centering
\resizebox{0.47\textwidth}{!}{
\begin{tabular}{cccccc}
    \midrule
    Datasets & Layer of Pruning   \\   \hline 
    FB15k-237-IMG  & 13,24,28,20,22,29,12,30,9,19,23,21,18,31,5,26     \\
    WN18-IMG     & 27,17,9,20,13,23,30,6,31,14,25,22,21,26,28,19    \\
    DB15K & 15,27,10,19,17,13,20,29,21,22,31,30,28,23,25,14    \\
    MKG-W & 19,8,17,16,27,30,23,11,22,29,20,24,21,26,18,31    \\
    \midrule
\end{tabular}
}
\caption{Attention pruning settings in the main experiments.}
\label{tab2}
\end{table}
\subsection{Multimodal Large Language Models}

With the rapid development of Multimodal Large Language Models (MLLMs), such as GPT-4 \cite{achiam2023gpt}, LLaVA-1.5 \cite{llava1.5}, Qwen2.5-VL \cite{qwen2.5vl}, and Emu2 \cite{Emu2}, vision and language understanding have been deeply integrated within a unified semantic space. These models typically adopt a paradigm where visual encoders \cite{vit} segment images into multiple regional patches, which are then encoded into dense image token sequences. These image tokens are projected to align with the dimensionality of text tokens and concatenated with textual inputs for joint processing by transformer backbones, enabling strong performance in visual question answering, image captioning, and general knowledge reasoning tasks.

In recent years, the redundancy of visual tokens has been widely recognized as a major source of increased computational and memory costs in multimodal models, prompting growing interest in addressing the inefficiency caused by excessive image tokens \cite{Deco,IPCV,wen2025stop,yang2025efficientvlatrainingfreeaccelerationcompression}. For instance, Qwen2.5-VL \cite{qwen2.5vl} incorporates dynamic resolution adjustment, DeCo \cite{Deco} performs parameter-free adaptive pooling compression at the patch level, and IPCV \cite{IPCV} mitigates visual token redundancy through shallow-layer token pruning and neighbor-guided reconstruction for recovering pruned tokens. 

However, existing methods compress visual tokens solely based on visual view, which is not well suited for the MKGC task. As demonstrated in Figure \ref{fig1} and further validated by our ablation studies, \textbf{visual tokens that are semantically aligned with the textual modality play a more critical role in MKGC, a factor largely overlooked by prior approaches}. To address this limitation, we propose MVTC, which leverages textual information to identify and preserve the most relevant visual tokens while compressing those that are less informative. This design facilitates more effective multimodal alignment and fusion in subsequent reasoning stages. Furthermore, to enable the application of MLLMs to MKGC, ELMM proposes several additional innovations beyond MVTC, including an attention pruning strategy and a multimodal knowledge reasoning completion layer. Together, these components allow ELMM to achieve state-of-the-art performance on MKGC benchmarks.

\section{Further Experiment Details}
\label{sec:appendix_ED}
\subsection{Training Hyperparameters.}
We adopt LLaVA-1.5 7B \cite{llava1.5} as the base Multimodal Large Language Model (MLLM), incorporating Low-Rank Adaptation (LoRA) \cite{hu2022lora} techniques into both the query and value layers to enable parameter-efficient fine-tuning. For LoRA related parameter settings, please refer to Table \ref{tab1}. All experiments are conducted using eight NVIDIA A100 GPUs, and both training and inference are implemented with the PyTorch framework. The parameter $K$ in the attention pruning strategy is always set to 16, indicating that 16 attention modules in the MLLM are pruned and replaced with linear projections. To determine which layers to prune, we randomly sample 1,000 training instances and compute the cosine similarity between the attention maps of adjacent layers for each dataset. Layers exhibiting the highest similarity—indicating strong redundancy—are selected for pruning. The specific pruned layers for each dataset are reported in Table~\ref{tab2}. The key parameter $H$ in MVTC represents the number of attention headers, and $H$ is set to 32 in all datasets, which represents 32 image tokens each for the textual and visual views, resulting in a total of 64 image tokens. For training parameter settings, please refer to Table \ref{tab12}. We report the version numbers of used packages in Table \ref{tab:package_version}.

\begin{table}[b]
\centering
\resizebox{0.47\textwidth}{!}{
\begin{tabular}{cccccc}
    \midrule
    Datasets & MLLM  & Batch Size  & Optimizer &  Epoch &  Learning Rate \\   \hline 
    FB15k-237-IMG  & LLaVA-1.5 7B  & 32 & Adam  & 5  & 3e-4   \\
    WN18-IMG     & LLaVA-1.5 7B  & 16 & Adam  & 2  & 2e-4   \\
    DB15K     & LLaVA-1.5 7B  & 64 & Adam  & 2  & 1e-4   \\
    MKG-W      & LLaVA-1.5 7B  & 64 & Adam  & 2  & 1e-4   \\
    \midrule
\end{tabular}
}
\caption{Training settings in the main experiments.}
\label{tab12}
\end{table}

\subsection{Prompt used in the experiment.}
Across all datasets, we employ a unified training and evaluation prompt. Specifically, for instances where the task is to predict the tail entity given a \textbf{head} entity and a \textbf{relation}, we use the following prompt formulation:
\begin{verbatim}
    "Next you will be given the Head entity, 
    which has image representations: 
    <image>.
    
    Suppose that you are an excellent linguist 
    studying a three-word language. 
    Given the following dictionary:
    
    Input\tType\tDescription
    {h}\tHead entity\t{h_des}
    {r}\tRelation\t{r_des}
    
    Please complete the last word (?) 
    of the sentence: {h}{r}?"
\end{verbatim}
Similarly, for instances where the task is to predict the head entity given a \textbf{relation} and a \textbf{tail} entity, we use the following prompt formulation:
\begin{verbatim}
    "Next you will be given the Head entity, 
    which has image representations: 
    <image>.
    
    Suppose that you are an excellent linguist 
    studying a three-word language. 
    Given the following dictionary:
    
    Input\tType\tDescription
    {t}\tHead entity\t{t_des}
    {inv_r}\tRelation\t{inv_r_des}
    
    Please complete the last word (?) 
    of the sentence: {t}{inv_r}?"
\end{verbatim}
\begin{table}[!h]
\centering
\scalebox{0.82}{\begin{tabular}
{lc|lc}
\toprule
 Package & Version & Package & Version \\
\midrule
PyTorch & 2.0.0 & transformers & 4.48.0 \\
deepspeed & 0.10.0 & tokenizers & 0.20.1 \\
scatter & 2.1.2 & sparse & 0.6.18 \\
datasets & 2.14.3 &  &  \\
\bottomrule
\end{tabular}}
\caption{Versions of used packages.}
\label{tab:package_version}
\end{table}
\begin{table*}[ht]
    \centering
    \resizebox{1 \textwidth}{!}{
    \begin{tabular}{cccccccccc}
\hline
\multirow{2}{*}{Model Name}  & \multicolumn{4
    }{c}{DB15K} &  & \multicolumn{4}{c}{MKG-W}    \\ \cline{2-5} \cline{7-10} 
                                                   & MRR   & Hits@1  & Hits@3 & Hits@10 &  & MRR  & Hits@1 & Hits@3 & Hits@10 \\   \hline 
\multicolumn{10}{c}{\textbf{Baseline Models}} \\ \hline
IKRL(IJCAI 2017)      & 26.8 & 14.1 & 34.9 & 49.1 && 32.4 & 26.1 & 34.8 & 44.1   \\
TBKGC(NAACL 2018)     & 28.4 & 15.6 & 37.0 & 49.9 && 31.5 & 25.3 & 34.0 & 43.2   \\
TransAE(IJCNN 2019)   & 28.1 & 21.3 & 31.2 & 41.2 && 30.0 & 21.2 & 34.9 & 44.7   \\
MMKRL(APIN 2025)   & 26.8 & 13.9 & 35.1 & 49.4 && 30.1 & 22.2 & 34.1 & 44.7   \\
RSME(MM 2021)   & 29.8 & 24.2 & 32.1 & 40.3 && 29.2 & 23.4 & 32.0 & 40.4  \\
VBKGC(KDD 2022)  & 30.6 & 19.8 & 37.2 & 49.4 && 30.6 & 24.9 & 33.0 & 40.9\\
OTKGE(NeurIPS 2022)  & 23.9 & 18.5 & 25.9 & 34.2 && 34.4 & 28.9 & 36.3 & 44.9 \\
IMF(WWW 2023)     & 32.3 & 24.2 & 36.0 & 48.2 && 34.5 & 28.8 & 36.6 & 45.4   \\ 
QEB(MM 2023)  & 28.2 & 14.8 & 36.7 & 51.6 && 32.4 & 25.5 & 35.1 & 45.3   \\ 
VISTA(EMNLP 2023)   & 30.4 & 22.5 & 33.6 & 45.9 && 32.9 & 26.1 & 35.4 & 45.6\\
AdaMF(COLING 2024)   & 32.5 & 21.3 & 39.7 & 51.7 && 34.3 & 27.2 & 37.9 & 47.2   \\
MyGO(AAAI 2025)  & 37.7 & 30.1 & 41.3 & 52.2 && 36.1 & 29.8 &38.5 & 47.8 \\
MCKGC(AAAI 2025)      & \underline{39.8}    & \underline{31.9}   & \underline{43.8}  & \underline{54.7}  && \underline{36.9} &  \underline{31.3}  &  \underline{38.9}  & 47.4   \\
K-ON (AAAI 2025)  & 38.1   & 30.1   & 42.8  & 53.6  &  & 36.6   &  30.1  & 38.7  & \underline{48.3}   \\ 
\hline
ELMM(Ours)   & \textbf{41.2}    & \textbf{34.1}  & \textbf{45.5} & \textbf{56.7} &  & \textbf{38.4}  & \textbf{33.5} & \textbf{41.7} & \textbf{51.5}  \\ \hline
\end{tabular}
}
\caption{Models performance on the DB15K and MKG-W datasets. \textbf{Bold} and \underline{underline} text indicate the best and second-best performance, respectively.}
\label{tab:my-table2}
\end{table*}
\section{More Experiment Results.}
\label{sec:appendix_ER}
\subsection{Main Results on DB15K and MKG-W.}
\subsubsection{Comparison Methods and Evaluation Protocol.}
Since many methods evaluated on FB15k-237-IMG and WN18-IMG are not tested on DB15K and MKG-W, and vice versa, we re-select the comparison methods to enable a fair and meaningful evaluation on the DB15K and MKG-W datasets. The selected baselines primarily include IKRL~\cite{IKRL}, TBKGC~\cite{TBKGC}, TransAE~\cite{TransAE}, MMKRL~\cite{MMKRL}, RSME~\cite{RSME}, VBKGC~\cite{vbkgc}, OTKGE~\cite{OTKGE}, IMF~\cite{IMF}, QEB~\cite{QEB}, VISTA~\cite{VISTA}, AdaMF~\cite{AdaMF-MAT},MCKGC \cite{MCKGC}, MyGO~\cite{MyGO},and K-ON \cite{kno}.

We adopt standard evaluation metrics, including Hits@k (Hits@1, Hits@3, and Hits@10) to measure ranking accuracy at different cutoffs, and Mean Reciprocal Rank (MRR) to evaluate the overall ranking performance of each model.

\subsubsection{Main results.}
We conduct a comprehensive comparison between the proposed ELMM and 14 state-of-the-art methods on the DB15K and MKG-W datasets. As reported in Table~\ref{tab:my-table2}, ELMM consistently outperforms all competing approaches across all evaluation metrics, which is in line with the results observed on the FB15k-237-IMG and WN18-IMG datasets. These results demonstrate the robustness and general effectiveness of ELMM for multimodal knowledge graph completion, and further highlight the potential of multimodal large language models (MLLMs) in advancing multimodal knowledge graph completion.

\subsection{Robustness Experimental Test.}
We conducted a systematic evaluation of the robustness of ELMM under modality-missing scenarios. Specifically, to simulate modality incompleteness in real-world applications, we randomly removed either textual descriptions or visual information from a subset of training samples, while keeping the evaluation procedures unchanged. This setting is designed to assess the ability of ELMM to operate effectively under incomplete multimodal inputs.We considered modality-missing rates of 20\%, 40\%, 60\%, and 80\%. For example, a 20\% missing rate indicates that 10\% of the training samples lack textual information and an additional 10\% lack visual information; higher missing rates follow the same proportion. 

All experiments were conducted on the FB15k-237-IMG dataset, with Hits@1 adopted as the evaluation metric. The experimental results are illustrated in Figure \ref{Robustness}. An interesting phenomenon can be observed: when the modality missing rate is set to 20\%, the model achieves a noticeable performance improvement compared to the fully observed setting. We hypothesize that a moderate degree of modality missing increases the training difficulty, thereby exerting a regularization effect that encourages the model to learn more robust representations.

As the modality missing rate further increases, the overall Hits@1 performance exhibits a downward trend. Nevertheless, even under relatively high missing rates, the performance degradation remains limited. This observation indicates that ELMM demonstrates strong robustness in modality-missing scenarios. We attribute this robustness to the ability of multimodal large language models to leverage their internal knowledge and contextual reasoning capabilities to partially compensate for missing modality information, thereby mitigating the adverse effects caused by modality incompleteness.
\begin{figure}[t]
\centering
\includegraphics[width=0.99\columnwidth]{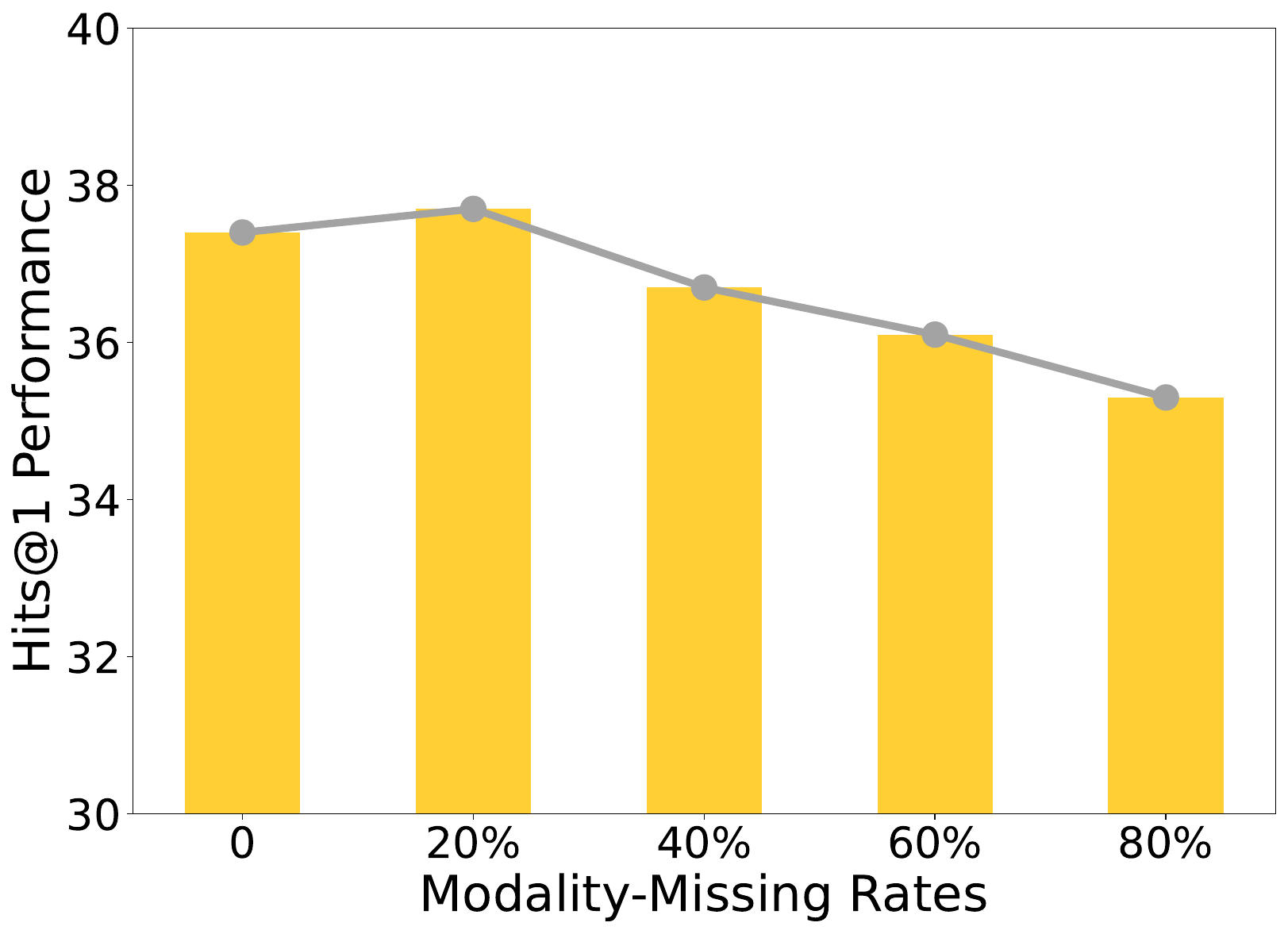}
\caption{Performance under varying modality-missing rates on FB15k-237-IMG. The y-axis corresponds to Hits@1, and the x-axis represents the proportion of missing modality information.}
\label{Robustness}
\end{figure}

\end{document}